\newcommand*{\xsuppose}{\mathbf{suppose}\mathop{}}
\newcommand*{\ysuppose}{\mathbf{suppose}}
\newcommand*{\xif}{\mathbf{if}\mathop{}}
\newcommand*{\yif}{\mathbf{if}}
\newcommand*{\xthen}{\mathop{}\mathbf{then}\mathop{}}
\newcommand*{\xelse}{\mathop{}\mathbf{else}\mathop{}}
\newcommand*{\yelse}{\mathbf{else}}
\newcommand*{\xcase}{\mathbf{case}\mathop{}}
\newcommand*{\ycase}{\mathbf{case}}
\newcommand*{\xof}{\mathop{}\mathbf{of}\mathop{}}
\newcommand*{\xend}{\mathop{}\mathbf{end}}
\newcommand*{\xbool}{\mathbf{Bool}}
\newcommand*{\xtrue}{\mathbf{True}}
\newcommand*{\xfalse}{\mathbf{False}}
\newcommand*{\xBoolTwo}{\mathbf{Bool2}}
\newcommand*{\xPair}{\mathbf{Pair}}
\newcommand{\mask}[2]{{\mathpalette\mask@{{#1}{#2}}}}
\newcommand{\mask@}[2]{\mask@@{#1}#2}
\newcommand{\mask@@}[3]{%
  \settowidth{\dimen@}{$\m@th#1#2$}%
  \makebox[\dimen@]{$\m@th#1#3$}%
}
\title{Genetic Programming with Local Scoring}
\author{Max Vistrup}
\date{November 11, 2022}
\begin{document}

\begin{abstract}
    We present new techniques for synthesizing programs
        through sequences of mutations.
    Among these are
        (1) a method of \emph{local scoring}
                assigning a score to each expression in a program,
            allowing us to more precisely identify buggy code,
        (2) \emph{\texttt{suppose}-expressions}
            which act as an intermediate step to
                evolving \texttt{if}-conditionals, and
        (3) \emph{cyclic evolution}
            in which we evolve programs through phases of expansion and reduction.
    To demonstrate their merits,
        we provide a basic proof-of-concept implementation
        which we show evolves correct code for several functions
            manipulating integers and lists,
            including some that are intractable
                by means of existing Genetic Programming techniques.
\end{abstract}

\maketitle
\tableofcontents

\section{Introduction}
\label{sec:introduction}

From Darwinian evolution, a process of local optimization,
    emerged highly complex organisms.
An organism is, in a sense, a program encoded by DNA
    determining physical actions based on observed data.
It is therefore natural to ask
    whether it would also be possible to evolve through natural selection
        computer code that solves a given problem.
This is the task at the heart of Genetic Programming.

This paper is aimed at describing a hill-climbing algorithm for
    evolving programs,
    taking an approach that is quite different
        from established Genetic Programming methods.
As a starting point,
    let us first sketch an extremely simple hill-climbing algorithm
        for evolving programs for a programming language $\mathscr L$.
For that algorithm,
    we will need
    \begin{enumerate}[(1)]
        \item
        an initial program $P_0 \in \mathscr L$,\footnote{
            For example, this could be a randomly generated program
                or a program that we are trying to debug.
        }
        \item
        a scoring function $S : \mathscr L \to [0, 1]$
            assessing how well a program is performing at solving a given task
            with $1$ meaning perfect,\footnote{
                $S$ will typically work by
                    running the program on a pre-defined list of inputs and
                    then comparing the actual outputs to the desired outputs,
                    assigning a score based on how many of these cases are correct.
            } and
        \item
        for each program $P \in \mathscr L$,
            a probability distribution $M(P)$ on $\mathscr L$
            of \q{mutations} of $P$.\footnote{
        A good choice of the distribution $M(P)$ should minimize
            the expected score loss
            $
                L(P) \defeq S(P) - \mathbb E[S(M(P))].
            $
        In particular, likely mutations should ideally
            only affect the program behavior slightly.
        However, small changes in code often have major effect on the behavior of the program,
            so typically $L > 0$.
    }
    \end{enumerate}
Given this, the (non-deterministic) algorithm produces
    a sequence of programs $P_0, P_1, \ldots \in \mathscr L$
    in which $P_0$ is the initial program and for $k \geq 1$,
    \[
        P_k \defeq
        \begin{cases}
            P_k',
                & \text{if $S(P_k') > S(P_{k - 1})$;} \\
            P_{k - 1},
                & \text{otherwise;}
        \end{cases}
    \]
    where $P_k'$ is randomly sampled from $M(P_{k - 1})$.
The algorithm terminates with solution $P_k$ when $S(P_k) = 1$,
    which may or may not happen eventually.
Put simply, the algorithm attempts to improve a program one small step at a time
    until it arrives at a satisfying solution,
    quite similar to
        how a human might approach writing or debugging a program.

This paper describes some inadequacies of this primitive idea
    and how to overcome them,
    culminating in a feasible algorithm for evolving programs along these lines.
Our investigation is structured as follows.
\begin{enumerate}[(1)]
    \item
    In \cref{sec:local},
        we shall see how assign a score to each expression in a program
            as opposed to the entire program at once.
    In this way,
        we get a more precise understanding of which parts of a program are buggy.
    Later, these \q{local scores} are leveraged to choose mutations so that
        the expressions that are the most buggy are
            the most likely to be mutated.
    \item
    In \cref{sec:suppose},
        we introduce $\ysuppose$-expressions.
    These are expressions that are waiting to be promoted into $\yif$-conditionals.
    This two-stage process gives an efficient and precise way
        to evolve conditions for $\yif$-conditionals.
    \item
    In \cref{sec:principles},
        we discuss greedy criteria
            for when a mutation should be regarded successful.
    Specifically, we will mutate one expression at a time,
        and the mutation is regarded successful
            if the score of the new expression is
                strictly greater than the score of the old expression.
    \item
    In \cref{sec:cyclic},
        we describe an evolutionary process which consists of \emph{cycles}.
    Each cycle is divided into four stages:
        stretching, mutation, rewinding, and compression.
    This design allows us to probe new program structures
        while avoiding endless program expansion.
    \item
    In \cref{sec:poc},
        we present a basic proof-of-concept implementation of these ideas and
        show that it is able to generate correct code for several small problems.
    On one well-known problem, the even-parity problem,
        we also compare our implementation to some found in the literature
        and find that our methods outperforms the best of the considered implementations
            by more than an order of magnitude.
    \item
    In \cref{sec:future},
        we discuss future theoretical and practical directions,
        including integration with Deep Learning techniques.
\end{enumerate}
%
%

\subsection{Functional Programming as a setting for Genetic Programming}
This paper is focused on functional programs,
    although some of its techniques are applicable to other paradigms as well.
The lack of side effects lends itself to Genetic Programming
    as it means that changing a part of the code has
        a more predictable effect on the program behavior:
    we need not worry about how a state change could affect
        the behavior of expressions elsewhere in the program.

\section{Local Scoring}
\label{sec:local}

When evolving programs,
    we need to be able to quantify how well a program is performing.
As we discussed in the introduction,
    one could do that by assigning the program a score
        based on how well it adheres to its test cases.
However, with such an approach to scoring,
    if a test case fails,
    we gain little insight into what parts of the code could be the root cause of the failure.
If we were able to trace back such failure to specific parts of the code,
    we could focus on mutating those parts.

In this section, we describe a simple method for assigning each expression a score.
We assume that every type is an Algebraic Data Type
    so that values can be viewed as trees;
    each node corresponds to a value constructor
        and its children correspond to its arguments.
Then, the idea is to augment each node in the value tree with a \emph{trace},
    which is a list of expressions that could affect the value at that node.
To state this method precisely,
    we exhibit a rudimentary programming language
    implementing this idea.

\subsection{A programming language with traced values}
\label{subsec:language}
As the setting for this paper,
    we consider a simplistic, functional programming language
    whose values are \q{traced}.
We first describe its syntax:
    \begin{align*}
        f
        \Coloneqq{}& g_1 \mid g_2 \mid \cdots
        &
        \text{function}
        \\
        c
        \Coloneqq{}& \xtrue \mid \xfalse \mid d_1 \mid d_2 \mid \cdots
        &
        \text{value constructor}
        \\
        x
        \Coloneqq{}& y_1 \mid y_2 \mid \cdots
        &
        \text{variable}
        \\
        t
        \Coloneqq{}& 1 \mid 2 \mid \cdots
        &
        \text{expression tag}
        \\
        E
        \Coloneqq{}&
        x \mid
        c(e, \ldots, e) \mid
        f(e, \ldots, e)
        &
        \\
        \mid{}&
        \xcase e \xof
            c(x, \ldots, x) \to e;
            \cdots;
            c(x, \ldots, x) \to e
        \xend
        \\
        \mid{}&
        \xif e \xthen
            e
        \xelse
            e
        \xend
        \\
        \mid{}&
        \xsuppose e \xthen
            e
        \xend
        \\
        e
        \Coloneqq{}&
        E^{(t)}
        &
        \text{tagged expression}
        \\
        D
        \Coloneqq{}&
        f(x, \ldots, x) = e
        &
        \text{function definition}
        \\
        P
        \Coloneqq{}&
        D \mid D; P
        &
        \text{program}
    \end{align*}
We stipulate that
    \begin{enumerate}[(1)]
        \item
        in a program $P$,
            each function $g_i$ may be declared at most once,
        \item
        in a program $P$,
            each participating expression has a unique tag,
            that is,
            among all the bodies of functions and all their subexpressions,
            each tag $i$ appears at most once,
        \item
        cases in a $\ycase$-expression should be disjoint,
            that is,
            the constructors appearing on the left-hand side of the arms
                should be pairwise distinct,
        \item
        a variable can be bound at most once in each case of a $\ycase$-expression,
            that is,
            the left-hand side of the arm takes the form $c(x_1, \ldots, x_n)$
                where $x_1, \ldots, x_n$ are pairwise distinct variables, and
        \item
        variables bound by a case of a $\ycase$-expression cannot
            overshadow variables that are already bound.
    \end{enumerate}
We now describe the typing rules.
We use a simple nominal type system with types
    \[
        \tau \Coloneqq \xbool \mid \tau_1 \mid \tau_2 \mid \cdots.
    \]
A \emph{typing context} $\Gamma$ is a collection of judgments of the forms
    \begin{enumerate}[(1)]
        \item
        a variable typing:
        $x : \tau$,
        \item
        a function signature:
        $f : (\tau, \ldots, \tau) \to \tau$, or
        \item
        a type definition:
        $\tau \Coloneqq c(\tau, \ldots, \tau) \mid \cdots \mid c(\tau, \ldots, \tau)$.
    \end{enumerate}
The typing rules are
\[
    \inferrule[TrueType]
        {}
        {\Gamma \vdash \xtrue^{(t)} : \xbool}
    \qquad
    \inferrule[FalseType]
        {}
        {\Gamma \vdash \xfalse^{(t)} : \xbool}
    \quad
    \inferrule[VarType]
        {x : a \in \Gamma}
        {\Gamma \vdash x^{(t)} : a}
\]
\[
    \inferrule[FunctionType]
        {
            \Gamma \vdash e_1 : a_1, \ldots, e_n : a_n
            \\
            f : (a_1, \ldots, a_n) \to b \in \Gamma
        }
        {\Gamma \vdash f(e_1, \ldots, e_n)^{(t)} : b}
\]

\[
    \inferrule[ConstructorType]
        {
            \Gamma \vdash e_1 : a_1, \ldots, e_n : a_n
            \\
            a \Coloneqq \cdots \mid c(a_1, \ldots, a_n) \mid \cdots \in \Gamma
        }
        {\Gamma \vdash c(e_1, \ldots, e_n)^{(t)} : a}
\]

\[
    \inferrule[CaseType]
        {
            a \Coloneqq
                c_1(a_{1, 1}, \ldots, a_{1, n_1})
                \mid
                \cdots
                \mid
                c_m(a_{m, 1}, \ldots, a_{m, n_m})
            \in \Gamma
            \\
            \Gamma \vdash
            d : a
            \\
            \Gamma, \forall j. x_{1, j} : a_{1, j} \vdash e_1 : b
            \\
            \cdots
            \\
            \Gamma, \forall j. x_{m, j} : a_{m, j} \vdash e_m : b
        }
        {
            \Gamma
            \vdash
            \xcase d \xof
                c_1(x_{1, 1}, \ldots, x_{1, n_1}) \to e_1;
                \cdots;
                c_m(x_{m, 1}, \ldots, x_{m, n_m}) \to e_n
            \xend^{(t)}
            : b
        }
\]

\[
    \inferrule[IfType]
        {\Gamma \vdash e_1 : a, e_2 : a, c : \xbool}
        {
            \Gamma \vdash
            \xif c \xthen
                e_1
            \xelse
                e_2
            \xend^{(t)}
            : a
        }
    \qquad
    \inferrule[SupposeType]
        {\Gamma \vdash e : a, c : \xbool}
        {
            \Gamma \vdash
            \xsuppose c \xthen
                e
            \xend^{(t)}
            : a
        }
\]
We shall often write just $P$ when we mean
    a well-typed program $(P, \Gamma)$,
    letting the typing context $\Gamma$ containing all the
        type definitions and
        function signatures
        be implicit.

At last, we explain the semantics of our programming language.
The values of the programming language are
    \[
        v \Coloneqq c^{[\pm t, \ldots, \pm t]}(v, \ldots, v)
    \]
    where $\pm \Coloneqq + \mid -$.
Conceptually, each value constructor is tagged with a \emph{trace},
    a list of expressions that could affect the value.
Its trace need not contain the traces of the arguments $(v, \ldots, v)$
    whose traces in turn consist of expressions
        that could affect the arguments but not the parent value constructor.
A sign $\pm$ should be thought of as measuring the \q{influence} of an expression:
    $+$ means that if the value is bad then the expression should be punished, and
    $-$ means that if the value is bad then the expression should be rewarded.
The relevance of $-$ will be revealed in \cref{sec:suppose}.

An \emph{evaluation context} $\sigma$ is a collection of bindings of the form
    \[
        B \Coloneqq x = v \mid f(x, \ldots, x) = e
    \]
    such that each variable $x$ and each function $f$
        appears at most once in the left-hand sides.
For a program $P$,
    one extracts an evaluation context $\sigma(P)$ by collecting all its function definitions.

We use the notation $\sigma \vdash e \downarrow v$ to denote that
    an expression $e$
    evaluates to a value $v$
    in the evaluation context $\sigma$.
Furthermore, we make use of the notation
    \[
        \(c^{[t_{k + 1}, \ldots, t_n]}(v_1, \ldots, v_k)\)^{[t_0, \ldots, t_k,]}
        \defeq
        c^{[t_0, t_1, \ldots, t_n]}(v_1, \ldots, v_k)
    \]
    where $t_i$ are signed tags.
We shall sometimes elide the $+$ in $+ t$ for notational convenience.
The rules for evaluation are as follows.
\[
    \inferrule[Var]
        {}
        {\sigma, x = v \vdash x^{(t)} \downarrow v^{[t,]}}
    \qquad
    \inferrule[Cons]
        {\sigma \vdash e_1 \downarrow v_1, \ldots, e_n \downarrow v_n}
        {\sigma \vdash c(e_1, \ldots, e_n)^{(t)} \downarrow c^{[t]}(v_1, \ldots, v_n)}
\]
\[
    \inferrule[Call]
        {
            \sigma \vdash e_1 \downarrow v_1, \ldots, e_n \downarrow v_n
            \\
            x_1 = v_1, \ldots, x_n = v_n \vdash e \downarrow v
        }
        {\sigma, f(x_1, \ldots, x_n) = e \vdash f(e_1, \ldots, e_n)^{(t)} \downarrow v^{[t,]}}
\]
\[
    \inferrule[Case]
        {
            \sigma \vdash d \downarrow c^{[t_1, \ldots, t_k]}(v_1, \ldots, v_n)
            \\
            \sigma, x_1 = v_1, \ldots, x_n = v_n \vdash e \downarrow v
        }
        {
            \sigma \vdash
            \xcase d \xof
                \cdots;
                c(x_1, \ldots, x_n) \to e;
                \cdots
            \xend^{(t_0)}
            \downarrow v^{[t_0, t_1, \ldots, t_k,]}
        }
\]
\[
    \inferrule[IfTrue]
        {
            \sigma \vdash
            c \downarrow \xtrue^{[t_1, \ldots, t_n]},
            e_1 \downarrow v
        }
        {
            \sigma \vdash
            \xif c \xthen
                e_1
            \xelse
                e_2
            \xend^{(t_0)}
            \downarrow v^{[t_0, t_1, \ldots, t_n,]}
        }
\]
\[
    \inferrule[IfFalse]
        {
            \sigma \vdash
            c \downarrow \xfalse^{[t_1, \ldots, t_n]},
            e_2 \downarrow v
        }
        {
            \sigma \vdash
            \xif c \xthen
                e_1
            \xelse
                e_2
            \xend^{(t_0)}
            \downarrow v^{[t_0, t_1, \ldots, t_n,]}
        }
\]

We postpone stating the semantics of $\ysuppose$-expressions to \cref{sec:suppose}.

Note that whenever an expression is evaluated,
    the value it produces is tagged with a unique identifier for that expression.
The rules are defined in such a way that
    each value ends up carrying traces of expressions that could affect the value.

To illustrate, let us elaborate a bit on the \textsc{Case} rule.
The rule says that what a $\ycase$-expression
    evaluates to may be affected by
    \begin{enumerate}[(1)]
        \item
        the $\ycase$-expression $t_0$ itself since, trivially,
            replacing the entire $\ycase$-expression could change its value,
        \item
        those expressions $t_1, \ldots, t_n$ that
            could affect the entirety, even the value constructor,
            of the value attained by $d$ that we are casing on, and
        \item
        those expressions that could affect the value attained by
            the body $e$ of the chosen arm.
    \end{enumerate}
The rules \textsc{IfTrue} and \textsc{IfFalse} are quite similar.\footnote{
        The reason that we include $\yif$-conditionals
            in spite of having $\ycase$-expressions
            is that they simplify our discussion of $\ysuppose$-expressions
                in \cref{sec:suppose}.
    }

\subsection{Extracting local scores}
\label{subsec:local_score_algo}
We shall need to consider values with other kinds of tags.
Define
    \[
        v^u \Coloneqq c^{u}(v^u, \ldots, v^u).
    \]
Of particular use are
    untagged values $v^{()}$,
    traced values $v = v^{[\pm t, \ldots, \pm t]}$, and
    score-tagged values $v^{[0, 1]}$.

Fix a base typing context $\Gamma_{\f{types}}$
    containing just the type definitions we shall use.
To assign scores to expressions in a program $(P, \Gamma)$
    with $\Gamma_{\f{types}} \subseteq \Gamma$,
    we will need a \emph{test criterion}
    consisting of the data
    \begin{enumerate}[(1)]
        \item
        a name and signature $f : (a_1, \ldots, a_n) \to b$ for the function we are testing,
            where $a_1, \ldots, a_n, b$ are types defined in $\Gamma_{\f{types}}$,
        \item
        untagged input values $(v_{1, k} : a_1, \ldots, v_{n, k} : a_n)$ for $k = 1, \ldots, N$, and
        \item
        for each $k = 1, \ldots, N$, an \emph{assessment function}
            \[
                s_{(v_{1, k}, \ldots, v_{n, k})} :
                \{w \in v^{()} \mid w : b\}
                \to
                \{w \in v^{[0, 1]} \mid w : b\}
            \]
            which takes a $b$-typed value
                and sets the tag at each node to a score
                (i.e., does not change the underlying untagged value)
            such that all the tags in
                $
                    s_{(v_{1, k}, \ldots, v_{n, k})}(w)
                $
                are $1$
                if and only if $w$ is the desired result of $f(v_{1, k}, \ldots, v_{n, k})$.\footnote{
                    Moreover,
                        scores above $0.5$ ought to represent \q{closer to right than to wrong} and
                        scores below $0.5$ \q{closer to wrong than to right}.
                    (This will later become important for $\ysuppose$-expressions.)
                }
    \end{enumerate}
We call the function specified in (1) the \emph{primary function}.

For example,
    suppose $P$ consisted of
        just one function definition $\text{swap}(x, y) = \cdots$ and
    $\Gamma$ consisted of
        the type definition
            $\xBoolTwo \Coloneqq \xPair(\xbool, \xbool)$ and
        the function signature
            $\text{swap} : \xBoolTwo \to \xBoolTwo$.
If we were to assess how well $\text{swap}$ swaps the booleans in the pair
    like $(x, y) \mapsto (y, x)$,
    the test criterion could consist of
    \begin{itemize}
        \item[($1'$)]
        the name and signature
            $
                \text{swap} : \xBoolTwo \to \xBoolTwo,
            $
        \item[($2'$)]
        the inputs
            $\xPair^{()}(\xtrue^{()}, \xtrue^{()})$,
            $\xPair^{()}(\xfalse^{()}, \xtrue^{()})$,
            \\
            $\xPair^{()}(\xtrue^{()}, \xfalse^{()})$,
            $\xPair^{()}(\xfalse^{()}, \xfalse^{()})$, and
        \item[($3'$)]
        the assessment functions
            \[
                s_{\xPair(a_1^{()}, a_2^{()})} :
                \xPair^{()}(b_1^{()}, b_2^{()})
                \mapsto
                \xPair^{(\delta(b_1, a_2) + \delta(b_2, a_1))/2}
                    (b_1^{\delta(b_1, a_2)}, b_2^{\delta(b_2, a_1)})
            \]
            where $\delta(x, y)$ is $1$ if $x = y$ and $0$ otherwise.
    \end{itemize}
The assessment functions are defined in such a way
    that the resulting tag to the $\xPair$ constructor measures
        how close the value is to the desired value $\xPair(a_2^{()}, a_1^{()})$,
    and the resulting tag to the boolean in each coordinate measures
        how close the respective coordinate is to the desired value.
Notice how the assessment functions give perfect scores
    when the behavior is as desired:
    \[
        s_{\xPair(a_1^{()}, a_2^{()})}(\xPair^{()}(a_2^{()}, a_1^{()}))
        =
        \xPair^{1}(a_2^{1}, a_1^{1}).
    \]

Let us return to the general setup $(1)$-$(3)$
    and explain how to compute a score for each expression in the program $P$.
We assume that $P$ contains such an $f$ and
    that $f$ always terminates.\footnote{
        The latter restriction can be dealt with
            by imposing limits to the number of execution steps allowed
            or by restricting what kinds of recursion are allowed.
        The minutiae are dealt with in our proof-of-concept implementation;
            see \cref{subsec:rec}.
    }
The process for obtaining expression-level scores is as follows.
\begin{enumerate}[(a)]
    \item
    For each $k = 1, \ldots, N$,
        evaluate the function call
        $
            f(\f{tr}(v_{1, k}), \ldots, \f{tr}(v_{n, k}))
        $
        in $P$
        and call its (traced) value $w_k$.
    Here, $\f{tr}(v)$ is the traced value obtained from the untagged value $v$ by
        setting each tag to the empty trace $[]$.
    \item
    For each $k = 1, \ldots, N$,
        compute
        $
            w_k' \defeq s_{(v_{1, k}, \ldots, v_{n, k})}(\f{untag}(w_k))
        $
        where $\f{untag}(w_k)$ is the untagged value obtained from the traced value $w_k$ by
            replacing each tag with $()$.
    \item
    For each node in $w_k$, extract
        the positively charged tags $+ t_1, \ldots, + t_p$ from its trace,
        the negatively charged tags $- u_1, \ldots, - u_q$ from its trace, and
        the tag $s \in [0, 1]$ of the corresponding node in $w_k'$;
        then collect all the tuples
            $(t_i, s)$ for $i = 1, \ldots, p$ and
            $(u_i, 1 - s)$ for $i = 1, \ldots, q$
            into a list $\ell$.
    \item
    An expression in $P$ with the tag $t$ is assigned the score,
        \[
            S_P(t) \defeq
                \frac
                    {\sum_{(t, s) \in \ell} s}
                    {\sum_{(t, s) \in \ell} 1}
        \]
        averaging over all the scores in $\ell$ associated to the tag $t$.
\end{enumerate}
Note how
    a positively charged tag $+t$ in some trace in $w_k$ contribute to the average defining $S_P(t)$
        with the score produced by
        $
            s_{(v_{1, k}, \ldots, v_{n, k})}
        $
        corresponding to the trace
    whereas a negatively charged tag $- t$ contribute
        with the inversion of that score.
The net effect is that
    an expression with tag $t$ that appears in a trace positively charged
        (resp.~negatively charged)
        is incentivized to maximize (resp.~minimize) the score corresponding to that trace
        if we are to maximize the score $S_P(t)$.

For the rest of the paper,
    we shall fix a test criterion (with respect to the fixed $\Gamma_{\f{types}}$).
Moreover, from now on,
    we will only consider programs $P$
        that contain
            the type definitions $\Gamma_{\f{types}}$ and
            a function definition for the primary function (1).
In particular, for each such program $P$,
    we get a scoring function $S_P$.
If $e$ is an expression in $P$ with expression tag $t$,
    we write $S_P(e) \defeq S_P(t)$.
Also, when the context resolves ambiguity,
    we shall omit the subscript $P$,
    writing $S$ instead of $S_P$.

\section{\texttt{suppose}-expressions}
\label{sec:suppose}

Let us informally consider the problem of evolving code for
    the minimum function
    $
        \f{min} : \Z \times \Z \to \Z.
    $
We may assess a candidate implementation
    based on some example arguments $A \subseteq \Z \times \Z$
    ranging over small numbers.
Start with the initial program
    \[
        \tag{$P_0$}
        \f{min}(n, m) = 0.
    \]
This is not faring particularly well:
    perhaps,
    there are a few $(n, m) \in A$ with $0 = n \leq m$ or $0 = m \leq n$
    but they make up an increasingly small fraction as $|A|$ grows.
If we try to mutate $P_0$ by replacing the body of $\f{min}(n, m)$
    by various randomly chosen expressions and testing it on $A$,
    it is not hard to find the improved program
    \[
        \tag{$P_1$}
        \f{min}(n, m) = n,
    \]
    which performs a lot better:
    it is correct whenever $n \leq m$
        which is roughly $\frac12$ of the time.
However, at this point, we stagnate.
The next program we want is
    \[
        \tag{$P_2$}
        \f{min}(n, m) = \xif n \leq m \xthen n \xelse m \xend,
    \]
    but this is quite a leap from $P_1$ and
    would therefore take a long time
        to discover through picking out random expressions.
More complex examples are even less tractable.
How do we overcome this problem of evolving $\yif$-conditionals?

$\ysuppose$-expressions allow us to separate
    the problem of evolving the condition
    from the problem of evolving the $\yelse$-branch.
In the above informal example,
    small intermediate steps between $P_1$ and $P_2$ could be
    \begin{align*}
        &\f{min}(n, m) = n
        \\
        \becomes{}&
        \f{min}(n, m) = \xsuppose \xtrue \xthen n \xend
        \\
        \becomes{}&
        \f{min}(n, m) = \xsuppose n \leq m \xthen n \xend
        \\
        \becomes{}&
        \f{min}(n, m) = \xif n \leq m \xthen n \xelse n \xend
        \\
        \becomes{}&
        \f{min}(n, m) = \xif n \leq m \xthen n \xelse m \xend,
    \end{align*}
Conceptually, $\ysuppose$-expressions encourage the condition,
    initially constant $\xtrue$,
    to evolve to be $\xtrue$ whenever the body is correct and $\xfalse$ otherwise.
When the condition for the $\ysuppose$-expression is good enough,
    we will promote the $\ysuppose$-expression into an $\yif$-conditional:
    \[
        \xsuppose c \xthen e \xend
        \quad
        \becomes
        \quad
        \xif c \xthen e \xelse e \xend.
    \]
This evolutionary strategy is established in \cref{sec:cyclic}.

With this discussion in mind,
    let us now state the semantics of $\ysuppose$-expressions.
We use the notation established in \cref{sec:local}.
Define ${-}{+}t \defeq -t$ and ${-}{-}t \defeq +t$.
The rules are then as follows.
\[
    \inferrule[SupposeTrue]
        {
            \sigma \vdash
            c \downarrow \xtrue^{[t_1, \ldots, t_n]},
            e \downarrow v
        }
        {
            \sigma \vdash
            \xsuppose c \xthen
                e
            \xend^{(t_0)}
            \downarrow v^{[t_0, t_1, \ldots, t_n,]}
        }
\]
\[
    \inferrule[SupposeFalse]
        {
            \sigma \vdash
            c \downarrow \xfalse^{[t_1, \ldots, t_n]},
            e \downarrow v
        }
        {
            \sigma \vdash
            \xsuppose c \xthen
                e
            \xend^{(t_0)}
            \downarrow v^{[t_0, -t_1, \ldots, -t_n,]}
        }
\]
Thus, if we disregard traces,
    $\ysuppose$-expressions just evaluate their bodies.
However, if we follow the Local Scoring model set out in \cref{sec:local},
    we see that if $c$ evaluates to $\mathbf{True}$,
    the better $e$ performs,
    the higher score is assigned to $c$,
    whereas if $c$ evaluates to $\xfalse$,
    the better $e$ performs,
    the lower score is assigned to $c$.

We will need the following lemma in \cref{sec:cyclic}.
\begin{lemma}\label{lem:suppose_true}
    Let $P$ be a program.
    Suppose $P$ contains an expression
        $
            \xsuppose \xtrue^{(t)} \xthen
                e
            \xend^{(u)}.
        $
    Then $S(t) = S(u)$.
\end{lemma}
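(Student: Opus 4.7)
The plan is to track every appearance of the tags $t$ and $u$ in the traces produced while evaluating the test calls, and show that they contribute identical data to the list $\ell$ from \cref{subsec:local_score_algo}, so that the averages defining $S(t)$ and $S(u)$ must coincide.

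First, I would note that, by uniqueness of expression tags in $P$, the subexpression $\xtrue^{(t)}$ appears only as the condition of the given $\ysuppose$-expression. Whenever this $\ysuppose$-expression is evaluated, the \textsc{Cons} rule gives $\sigma \vdash \xtrue^{(t)} \downarrow \xtrue^{[+t]}$, and then \textsc{SupposeTrue} fires — \textsc{SupposeFalse} cannot, since the condition is literally $\xtrue$ — with $[t_1, \ldots, t_n] = [+t]$ and $t_0 = u$. The body's value $v$ is thus wrapped as $v^{[+u, +t,]}$, so its outermost trace gains the two new entries $+u, +t$ sitting side by side in that order.

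The key step is an invariance claim, to be proved by induction on the evaluation derivation: in every traced value produced during the computation of $f(\f{tr}(v_{1,k}), \ldots, \f{tr}(v_{n,k}))$, each occurrence of $+t$ in a trace of any node is immediately preceded, within the same trace at the same node, by a matching $+u$, and conversely every $+u$ is immediately followed by such a $+t$. The reason is that both tags can enter traces only via the \textsc{SupposeTrue} step above; every other evaluation rule either carries subvalues with their traces intact (\textsc{Cons}) or prepends fresh signed tags to an entire existing outermost trace (\textsc{Var}, \textsc{Call}, \textsc{Case}, \textsc{IfTrue}, \textsc{IfFalse}, and \textsc{SupposeTrue}/\textsc{SupposeFalse} applied to \emph{other} expressions), so none of them can break up, duplicate, reorder, or separately introduce either tag. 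In particular, neither $t$ nor $u$ can ever appear negatively charged, since the only sign-flipping rule, \textsc{SupposeFalse}, never fires for this $\ysuppose$-expression.

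With the invariance in hand, the conclusion follows from inspecting step (c) of \cref{subsec:local_score_algo}: the $(t, s)$-pairs and the $(u, s)$-pairs land in $\ell$ in bijection with matching scores, since each node of any $w_k$ whose trace contains the adjacent $+u, +t$ contributes the same $s$ (the score at the corresponding node of $w_k'$) to both tallies, and nothing else contributes to either. Hence $\sum_{(t,s) \in \ell} s = \sum_{(u,s) \in \ell} s$ and the denominators match, giving $S(t) = S(u)$. The main obstacle is the bookkeeping in the invariance claim — a careful case analysis over the evaluation rules confirming that none can introduce, separate, or flip the pair $+u, +t$ except in the controlled way above — but once the right induction hypothesis is set up, each case should be essentially immediate.
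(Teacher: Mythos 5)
Your overall strategy is the paper's: show that the occurrences of $t$ and $u$ in the traces of the values $w_k$ are paired off so that they contribute identical entries to $\ell$, whence the two averages defining $S(t)$ and $S(u)$ coincide. But your invariance claim is wrong on one point, and the error sits exactly where the paper's proof is careful. You assert that neither $t$ nor $u$ can ever appear negatively charged because \textsc{SupposeFalse} never fires for \emph{this} $\ysuppose$-expression. That is not sufficient: \textsc{SupposeFalse} negates the tags in the trace of its \emph{condition's} value, and that trace can perfectly well contain the pair $+u, +t$. For instance, if $e : \xbool$ and the program contains $\xsuppose (\xsuppose \xtrue^{(t)} \xthen e \xend^{(u)}) \xthen e' \xend^{(w)}$, then whenever $e$ evaluates to $\xfalse$ the outer \textsc{SupposeFalse} produces a value whose trace contains $-u, -t$; the same happens whenever the value of the given $\ysuppose$-expression flows (through variables, calls, cases, and so on) into the condition of any other $\ysuppose$ that takes its false branch. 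Likewise, your claim that no rule can duplicate either tag fails: a value whose trace contains $+u, +t$ that is bound to a variable used twice has the pair duplicated. This is precisely why the paper's proof phrases the invariant as ``$+u$ (resp.\ $-u$) always appears coupled with $+t$ (resp.\ $-t$)'' rather than excluding negative occurrences.

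The conclusion survives because the defect is in the statement of your invariant, not in the underlying idea: every rule that moves, copies, or negates trace tags acts on a whole contiguous trace segment, so the adjacent pair $\pm u, \pm t$ is always transported together and with a common sign. Hence each node of $w_k$ with score $s$ contributes equally many $(t, s)$ as $(u, s)$ entries and equally many $(t, 1-s)$ as $(u, 1-s)$ entries to $\ell$, which is all that is needed. To repair the argument, restate the induction hypothesis as ``in every trace, occurrences of $t$ and $u$ come in adjacent pairs carrying the same sign,'' and redo the case analysis treating \textsc{SupposeFalse} on other expressions as a sign-flipping copy of the condition's trace rather than a mere prepending of fresh tags.
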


\begin{proof}
    Recall the algorithm specified in \cref{subsec:local_score_algo}.
    Note that the steps (a)-(c) are the same
        no matter whether we are computing $S(t)$ or $S(u)$.
    From the rule \textsc{SupposeTrue},
        we see that during step (a),
        $+u$ (resp.~$-u$) will always appear coupled with $+t$ (resp.~$-t$) in traces.
    It follows that
        $\ell$ has exactly $m$ pairs $(t, s)$
            if and only if it has exactly $m$ pairs $(u, s)$.
    Therefore,
        \[
            S(t)
            =
            \frac
                {\sum_{(t, s) \in \ell} s}
                {\sum_{(t, s) \in \ell} 1}
            =
            \frac
                {\sum_{(u, s) \in \ell} s}
                {\sum_{(u, s) \in \ell} 1}
            =
            S(u),
        \]
        completing the proof.
\end{proof}

\section{Some mutation principles}
\label{sec:principles}
We discuss here some simple principles for how to mutate programs
    by leveraging Local Scoring.
These will be used in combination with
    the methods presented in \cref{sec:cyclic}.

\subsection{Mutually exclusive mutation}
\label{subsec:mutex}
Changing a single expression has a high chance of breaking the program,
    but changing multiple expressions has an even higher chance of doing so.
To deal with this fragility,
    we shall change just one expression
    in each mutation step.

A simple scheme to mutate a program $P$ is to first pick a random expression.
The probability for picking a given expression $e$
    should be proportional to $1 - S(e)$,
    so that
        we are more likely to mutate what is broken and
        we never mutate what works perfectly.\footnote{
            In particular, under this scheme, dead code is never mutated.
        }
This expression is then replaced by a new, randomly generated expression of the same type
    (with expression tags disjoint from the expression tags used in $P$),
    giving a new program $P'$, the mutant of $P$.

\subsection{Greedy Principle of Mutation}
\label{subsec:greedy}
Suppose we use a hill-climbing algorithm
    which has as state a program $P$
        that it repeatedly tries to mutate
        until it finds a good mutant, a successor,
        which then replaces $P$,
    repeating this process until the program solves some task perfectly.
How do we determine if a given mutant $P'$ should replace $P$ as its successor?

If we go purely by some kind of global score based on the behavior of the program,
    as in \cref{sec:introduction},
    we fail to evolve the conditions for $\ysuppose$-expressions
        changing which has no exterior effect on the program behavior.
We therefore propose another criterion.
Recall that we mutate a program $P$ to a program $P'$
    by replacing some expression $e$ by a new expression $e'$.
The \emph{Greedy Principle of Mutation} stipulates
    that $P'$ should be the successor to $P$ if and only if $S_{P'}(e') > S_P(e)$.
The principle is \q{greedy} since
    the mutation can (and usually does) change the scores of
        expressions that are not $e$ and
        yet we only focus on how the score of $e$ changes.

Notice how the condition is
    $S_{P'}(e') > S_P(e)$ as opposed to $S_{P'}(e') \geq S_P(e)$.
This is a deliberate choice.
It means that we prefer doing no mutation to mutating:
    we only accept mutations that strictly improve the expression they change.
We shall refer to this property as \emph{mutation conservativity}.

\section{Cyclic Evolution}
\label{sec:cyclic}

We have described $\ysuppose$-expressions
    but we have yet to explain
        how $\ysuppose$ should be introduced during evolution and
        when these should be promoted into $\yif$-conditionals.
When mutating a program $P$,
    we would like to sometimes replace an expression
    \[
        e \quad \becomes \quad \xsuppose \xtrue \xthen e \xend
    \]
    after which mutations can begin to improve $\xtrue$ to a more meaningful condition.
If we were to just follow the simplistic scheme set out in \cref{sec:principles},
    such replacement might be tried
        but the resulting program will never succeed the existing program
    because the score of the replacing expression is the same as
        the score of the original expression $e$.
One could circumvent this issue
    by making an exception to mutation conservativity,
    making this specific type of mutation always succeed.
This, however, is not a very good solution to the problem at hand:
    the program could then grow unbounded
        as we add more and more redundant $\ysuppose$-expressions.

What we propose instead is to divide the evolution into
    phases of growth,
    phases of mutation, and
    phases of reduction;
    similar to how a person
        who is trying to gain muscle
        may divide their eating habits into phases of bulking and phases of cutting.
Specifically, instead of proceeding one mutation at a time,
    the basic unit of the algorithm will be a \emph{cycle}
    consisting of four phases:
    \begin{enumerate}[(1)]
        \item
        \emph{Stretching} in which a program is
            replaced by an equivalent, more redundant program
            by introducing
                redundant $\ysuppose$-expressions,
                redundant $\yif$-conditionals,
                redundant $\ycase$-expressions, and
                redundant new functions.
        \item
        \emph{Mutation} in which a number small changes are made to the program,
            replacing the program
                whenever a mutation satisfying the Greedy Principle of Mutation is found.
        \item
        \emph{Rewinding} in which we try to undo the stretches done in (a).
        \item
        \emph{Compression} in which we try to
            rewrite the program to an equivalent, smaller form.
    \end{enumerate}
The algorithm we propose evolves programs by
    repeatedly applying cycles until a program
        performing perfectly (in accord with the test criterion) is found.

We describe basic ways to implement each phase.
We emphasize that each can be extended and refined in many ways,
    some of which will be discussed in \cref{sec:future}.

\subsection{Stretching}
\label{subsec:stretch}
The stretching phase makes the program more redundant,
    setting the scene for the mutation phase
        to turn that redundancy into useful structure.
The stretching phase consists of $N_\f{stretch}$ \emph{stretches}
    for some fixed constant $N_\f{stretch} > 0$.
Each stretch picks out a random expression in the program
    with the probability of choosing an expression $e$
        being proportional to $1 - S(e)$.
The stretched program $P'$ is constructed from $P$
    by picking randomly among the following possibilities:
    \begin{enumerate}[(1)]
        \item
        replacing
            $e \becomes \xsuppose \xtrue \xthen e \xend,$
        \item
        if $e = \xsuppose c \xthen e' \xend$ and $S(c) > S(e')$,
            replacing $e \becomes \xif c \xthen e' \xelse e' \xend$,
        \item
        picking a variable $x : \tau$ bound in the context of $e$
            and replacing
            \begin{align*}
                e
                \becomes{}
                &\xcase x \xof
                    \\&
                    c_1(x_1, \ldots, x_{n_1})
                    \to
                    e[x/c_1(x_1, \ldots, x_{n_1})];
                    \\&
                    \cdots;
                    \\&
                    c_k(x_1, \ldots, x_{n_k})
                    \to
                    e[x/c_k(x_1, \ldots, x_{n_k})]
                    \\
                &\xend,
            \end{align*}
            where
                (a) $x_1, \ldots$ are variables that do not participate in $e$,
                (b) $c_1, \ldots, c_k$ are the value constructors for $\tau$:
                    \[
                        \tau \Coloneqq
                            c_1(\cdots \text{($n_1$ arguments)})
                            \mid
                            \cdots
                            \mid
                            c_k(\cdots \text{($n_k$ arguments)}),
                    \]
                and (c) $e[x/e']$ denotes $e$ with all occurrences of $x$ replaced by $e'$, or
        \item
        for
            $
                x_1 : a_1, \ldots, x_n : a_n
            $
            the variables bound in the context of $e : b$,
        adding a new function
            \begin{align*}
                &f :
                    (b, a_1, \ldots, a_n)
                    \to
                    b
                \\
                &f(y, x_1, \ldots, x_n) = y
            \end{align*}
        and replacing
            $
                e \becomes f(e, x_1, \ldots, x_n).
            $
    \end{enumerate}
In each case, the replacing expression
    should be tagged with some tags disjoint from those used in $P$,
    but we elide this for notational simplicity.

Let us elaborate on the condition
    $
        S(c) > S(e')
    $
    in stretch (2).
The justification for this condition comes from \cref{lem:suppose_true}
    which precludes stretch (2) from happening when $c = \xtrue$.
In this way,
    when a stretch (1) introduces a $\ysuppose$-expression,
    the condition must first improve
        (which can happen in the mutation phase)
        before a stretch (2) can promote it into an $\yif$-conditional.
As such, we do not end up immediately creating a lot of redundant $\yif$-conditionals.

\subsection{Mutation}
The mutation phase consists of $N_\f{mutate}$ mutations, successful or not,
    where $N_\f{mutate} > 0$ is some fixed constant.
A mutation changes the program in the way described \cref{subsec:mutex}.
A mutation is considered successful if it satisfies
    the Greedy Principle of Mutation (see \cref{subsec:greedy})
    in which case the mutant program replaces the program and the search continues.

\subsection{Rewinding}
The purpose of stretching was to create new program structures
    that the mutation phase could make use of.
However, not all stretches create useful program structures.
If we return to our informal example of \cref{sec:suppose} with evolving the $\min$ function,
    we could have the stretch
    \begin{align*}
        &\min(n, m) = \xif n \leq m \xthen n \xelse n \xend
        \\
        \becomes{}&
        \min(n, m) = \xif n \leq m \xthen n \xelse (\xsuppose \xtrue \xthen n \xend) \xend
    \end{align*}
But that would not be of much use
    because in the $\yelse$-branch $n$ is never the right expression.
So, unless the last $n$ is mutated,
    the condition for the $\ysuppose$-expression will just converge to $\xfalse$.
We must be able to deal effectively with such \q{useless stretches}
    to stop the program that is being evolved from accumulating redundancy.

After the mutation phase has completed,
    we propose to clean up the stretches
    by \q{rewinding} them.
Specifically, for each stretching operation $s$,
    we define a corresponding \emph{rewinding operation} $\overline s$,
    which is a transformation of programs that undoes
        (is left-inverse to)
        $s$.
Then, if stretches
    $
        s_1, \ldots, s_{N_\f{stretch}}
    $
    were made in the stretching phase in that order,
    the rewinding phase applies rewinding operations
    $
        \overline s_{N_\f{stretch}}, \ldots, \overline s_1
    $
    in that order
    (notice the reversal of order).

For a stretch $s$ out of the $N_\f{stretch}$ stretches done in the stretching phase,
    we will now describe its corresponding rewinding operation $\overline s$.
$\overline s$ depends on which type (1)-(4) of stretch $s$ is:
    \begin{itemize}
        \item[$(\overline 1)$]
        If $s$ is a stretch of type (1),
            $\overline s$ is the following program transformation.
        Let $e_1, \ldots, e_n$ be the $\ysuppose$-expressions
            that originated from $s$.
        Let
            $c_1, \ldots, c_n$ be their respective condition expressions and
            $e_1', \ldots, e_n'$ their respective bodies.
        For $i = 0, \ldots, n$,
            replace $e_i \becomes e_i'$ if $S(c_i) \leq S(e_i)$.
        \item[$(\overline 2)$]
        If $s$ is a stretch of type (2),
            $\overline s$ is the following program transformation.
        Let $e_1, \ldots, e_n$ be the $\yif$-conditionals
            that originated from $s$.
        Let
            $c_1, \ldots, c_n$ be their respective condition expressions and
            $e_1', \ldots, e_n'$ their respective $\xtrue$-branches.
        For $i = 0, \ldots, n$,
            replace $e_i \becomes \xsuppose c_i \xthen e_i' \xend$
            if the latter expression attains a score in the program with this replacement made
                at least the score $S(e_i)$.
        \item[$(\overline 3)$]
        If $s$ is a stretch of type (3),
            $\overline s$ is the following program transformation.
        Let $x$ be the variable chosen in $s$.
        Let $e_1, \ldots, e_n$ be the expressions $\xcase x \xof \cdots$
            that originated from $s$.
        Let
            $
                c_1(x_1, \ldots, x_{n_1}),
                \ldots,
                c_k(x_1, \ldots, x_{n_k})
            $
            be the cases that appear in each of those $\ycase$-expressions.
        For $i = 1, \ldots, n$,
            do the following.
        \begin{enumerate}[(a)]
            \item
            Look for a $1 \leq j \leq k$ such that
                (i) $n_j > 0$,
                (ii) the expression
                    $
                        e_{i, j}' \defeq
                        e_{i, j}[c_j(x_1, \ldots, x_{n_j})/x],
                    $
                    where $e_{i, j}$ is the body of the $\ycase$-expression $e_i$ to the case $c_j$,
                    does not reference the variables $x_1, \ldots, x_{n_j}$, and
                (iii) after replacing $e_i$ with $e_{i, j}'$,
                    the new $e_{i, j}'$ attains a score at least the score $S(e_i)$.
            If such a $j$ exists,
                replace $e_i \becomes e_{i, j}'$ in the program.
            \item
            If not, look instead for a $1 \leq j \leq k$ such that
                (i) $n_j = 0$ and
                (ii) after replacing $e_i$ with $e_{i, j}$,
                    the new $e_{i, j}$ attains a score at least the score $S(e_i)$.
            If such a $j$ exists,
                replace $e_i \becomes e_{i, j}$ in the program.
            \item
            If such a $j$ could not be found either, do nothing.
        \end{enumerate}
        \item[$(\overline 4)$]
        If $s$ is a stretch of type (4),
            $\overline s$ is the following program transformation.
        Let $e_1, \ldots, e_n$ be the call-expressions
            that originated from $s$.
        Let $f$ be the name of the function they are calling.
        Suppose its definition in the program is
            $
                f(x_1, \ldots, x_k) = e.
            $
        If $e$ contains a subexpression $f(\cdots)$, do nothing.
        Otherwise, do the following.
        For each $i = 1, \ldots, n$,
            replace $e_i = f(a_1, \ldots, a_k) \becomes e[x_1/a_1]\cdots[x_k/a_k]$.
        If after this transformation the expression $f(\cdots)$ appears nowhere in the program,
            furthermore remove the function definition
            $
                f(x_1, \ldots, x_k) = e
            $
            from the program.
    \end{itemize}
As before, we elide tags in notation,
    implicitly tagging expressions with unique tags.
And again, $e[a/b]$ denotes the expression $e$ with all occurrences of $a$ replaced by $b$.

Note that in each of these rewinding operations,
    we talk of possibly multiple expressions $e_1, \ldots, e_n$.
This is because an expression created by one stretch may be duplicated by another
    (stretches of type (2) or (3)).
It is therefore necessary to track
    what expressions originate in what stretches.
This can be done by tagging each expression by some extra bookkeeping data
    of which we shall not deal with the details.

The condition $S(c_i) \leq S(e_i)$ in $(\overline 1)$
    is again justified by \cref{lem:suppose_true}.
In particular, the condition is only fulfilled
    if the condition $c_i$ is at least as bad as the initial condition $\xtrue$.
In effect, we throw away a $\ysuppose$
    if its condition did not manage to improve at all during the mutation phase.

\subsection{Compression}
\label{subsec:compression}
The final phase is compression,
    replacing the program by an equivalent, shorter form.
There are indeed many ways to approach this problem of removing excess;
we present one which we found works reasonably well in practice.

As input for the algorithm,
    we need to run the program on the examples in the test criterion
    that we fixed in \cref{subsec:local_score_algo}
        (see step (a))
    and collect for each expression $e$ an \emph{input--output table}:
        the set with an element $(\sigma, \f{untag}(v))$
            for each evaluation $\sigma \vdash e \downarrow v$ done.

We say that an expression $e$ is \emph{linear} if either
    $e = x^{(t)}$ for $x$ a variable or
    $e = c(e_1, \ldots, e_n)^{(t)}$ for
        $c$ a value constructor and
        $e_1, \ldots, e_n$ linear expressions.

Recall that we defined a program to consist of
    a list of function declarations $f(x_1, \ldots, x_n) = e$.
The compression algorithm first applies the following procedure to each of the bodies $e$.
\begin{enumerate}[(1)]
    \item
    If $e = c(e_1, \ldots, e_n)^{(t)}$ for $c$ a value constructor,
        apply the procedure recursively to $e_1, \ldots, e_n$.
    \item
    If $e = \xsuppose c \xthen b \xend^{(t)}$,
        apply the procedure recursively to $c$ and $b$;
        and if after doing so
            $e$ became
            $e' = \xsuppose c' \xthen b' \xend^{(t)}$
            where $c'$ admits no variable $x^{(u)}$ as a subexpression,
            further replace $e' \becomes b'$.
    \item
    If neither of the above cases apply, try to find
        a linear expression $\ell$ with the behavior specified by the input--output table of $e$.
    (The algorithm for finding $\ell$ is postponed to \cref{subsubsec:lexprs}.)
    If such $\ell$ exists,
        pick one and
        replace $e \becomes \ell$
            (in which we tag $\ell$ with expression tags disjoint from those used in $P$),
        and we're done.
    If not,
        continue to the next steps.
    \item
    If $e = \xif c \xthen a \xelse b \xend^{(t)}$,
        apply the procedure recursively to $c$, $a$, and $b$.
    Suppose that after doing so
        $e$ became
        $e' = \xif c' \xthen a' \xelse b' \xend^{(t)}$.
    If $c' = \xtrue^{(u)}$ for some $u$,
        further replace $e' \becomes a'$.
    Or, if $c' = \xfalse^{(u)}$ for some $u$,
        further replace $e' \becomes b'$.
    \item
    If $e = f(e_1, \ldots, e_n)^{(t)}$ for $f$ a function,
        apply the procedure recursively to $e_1, \ldots, e_n$.
\end{enumerate}
After this, the algorithm removes from the program function definitions
    $
        g(\cdots) = \cdots
    $
    for functions $g$ that are not reachable from the primary function $f$ in the call graph.
The resulting program is the result of the compression phase.

Note that, in each step
    the $w_k$ used in
        the definition of $S$ in \cref{subsec:local_score_algo}
    remain unchanged if we disregard traces.
This is the sense in which the original program is equivalent to the compressed program,
    that is,
    they are equivalent on known input.

\subsubsection{An algorithm for finding linear expressions}
\label{subsubsec:lexprs}
We are presented with the problem of
    listing the linear expressions $\ell$ satisfying a given specification.
Let us be more precise.
We are given
    a type $\tau$,
    a typing context $\Gamma$ containing a definition of $\tau$, and
    an input--output table $T = \{(\sigma_1, v_1), \ldots, (\sigma_n, v_n)\}$
such that for each $i = 1, \ldots, n$,
    (i) $\sigma_i$ is an evaluation context and $v_i$ an untagged value of type $\tau$ and
    (ii) all variables found in $\sigma_i$ are found in $\Gamma$ and vice versa.
The task is then to compute the (untagged) linear expressions $\ell$
    that evaluate\footnote{
        according to semantics like those given in \cref{subsec:language}
            but ignoring traces
    }to $v_i$ in context $\sigma_i$ for every $i = 1, \ldots, n$.

If $n = 0$ (that is, the input--output table is empty),
    the problem is equivalent to finding all linear expressions of type $\tau$.
This possibly infinite list can easily be computed by a breadth-first search
    applied to the recursive definition of linear expressions
        given near the beginning of \cref{subsec:compression}.

Now, suppose $n > 0$.
For convenience,
    we shall rename the variables in $\Gamma$
        so that they are called $x_1, \ldots, x_n$.
Recall the notation for tagged values established in \cref{subsec:local_score_algo}.
An \emph{$\ell$-value} is a tagged value $\in v^{\mathscr P(\{\circ, 1, \ldots, n\})}$
    where $\mathscr P$ denotes the power set operator.
Moreover, given an $\ell$-value $w = c^A(w_1, \ldots, w_k)$,
    define
    \[
        \ell(w)
        \defeq
        \{x_i^{()} \mid i \in \N \cap A\}
        \cup
        \begin{cases}
            \{c^{()}(w_1', \ldots, w_k') \mid w_i' \in \ell(w_i)\},
                & \text{if $\circ \in A$;} \\
            \emptyset,
                & \text{otherwise.}
        \end{cases}
    \]

\begin{lemma}
    $\ell(w)$ is a finite set.
\end{lemma}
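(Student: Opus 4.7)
The plan is to prove this by structural induction on the $\ell$-value $w$. The recursion defining $\ell$ descends from $w = c^A(w_1, \ldots, w_k)$ to its proper children $w_1, \ldots, w_k$, which are themselves $\ell$-values of strictly smaller size, so structural induction on $w$ is well-founded. Thus I may assume as the inductive hypothesis that each $\ell(w_i)$ is finite.

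The inductive step then splits $\ell(w)$ into its two disjoint components and bounds each. The first component, $\{x_i^{()} \mid i \in \mathbb{N} \cap A\}$, is finite because $A \subseteq \{\circ, 1, \ldots, n\}$, so this set has at most $n$ elements. For the second component, if $\circ \notin A$ it is empty and hence finite. If $\circ \in A$, then the set in question is the image of the Cartesian product $\ell(w_1) \times \cdots \times \ell(w_k)$ under the map $(w_1', \ldots, w_k') \mapsto c^{()}(w_1', \ldots, w_k')$; by the inductive hypothesis each factor is finite, so the product is finite, so its image is finite. Since $\ell(w)$ is the union of two finite sets, it is finite.

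I do not expect any genuine obstacles here. The only thing to be careful about is stating the induction precisely: one should induct on, say, the total number of value-constructor nodes in $w$, or equivalently on the height of the tree underlying $w$, since the set $A$ may contain $\circ$ together with arbitrary subsets of $\{1, \ldots, n\}$ and this has no effect on the recursion depth. Once the induction is framed on the structure of the $\ell$-value itself (not on $A$), the argument reduces to the elementary facts that subsets of $\{1, \ldots, n\}$ are finite and that finite unions and finite Cartesian products of finite sets are finite.
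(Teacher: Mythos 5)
Your proof is correct and follows the same route as the paper: structural induction on $w$ with the observation that $A$ is finite and that finite unions and finite Cartesian products of finite sets are finite. You simply spell out the details that the paper's (very terse) proof leaves implicit.
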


\begin{proof}
    Proceed by induction over $w$:
        assume that the lemma is true
            when $w$ is replaced by any of the $w_i$.
    Since $A$ is finite,
        the induction step follows from the formula defining $\ell(w)$.
\end{proof}

To an input--output pair $(\sigma, v)$
    where
    $
       v = c^{()}(v_1, \ldots, v_k),
    $
    we associate an $\ell$-value
        \[
            w(\sigma, v)
            \defeq
            c^{\{\circ\} \cup \{i \mid (x_i = v) \in \sigma\}}
                (w(\sigma, v_1), \ldots, w(\sigma, v_k)).
        \]

\begin{lemma}
    $\ell(w(\sigma, v))$ is the set of linear expressions
        that evaluates to $v$ in the evaluation context $\sigma$.
\end{lemma}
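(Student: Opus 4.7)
The plan is to prove both inclusions by structural induction on the (finite) value $v$, or equivalently on $w(\sigma, v)$. Write $v = c^{()}(v_1, \ldots, v_k)$ and let $A = \{\circ\} \cup \{i \mid (x_i = v) \in \sigma\}$, so that by definition $w(\sigma, v) = c^A(w(\sigma, v_1), \ldots, w(\sigma, v_k))$ and the inductive hypothesis gives the result for each $w(\sigma, v_j)$.

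For the $\supseteq$ direction, I would take an arbitrary element of $\ell(w(\sigma, v))$ and split according to the two clauses in the defining formula. If it has the form $x_i^{()}$ with $i \in A$, then $(x_i = v) \in \sigma$ by the construction of $A$, and so $x_i^{()}$ evaluates to $v$ in $\sigma$ by the analogue of the \textsc{Var} rule (with traces ignored). Otherwise, since $\circ \in A$ holds by construction of $w(\sigma, v)$, the element has the form $c^{()}(\ell_1, \ldots, \ell_k)$ with each $\ell_j \in \ell(w(\sigma, v_j))$; by the inductive hypothesis each $\ell_j$ evaluates to $v_j$ in $\sigma$, and then the \textsc{Cons} rule gives that $c^{()}(\ell_1, \ldots, \ell_k)$ evaluates to $v$.

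For the $\subseteq$ direction, I would take an arbitrary linear expression $\ell^*$ that evaluates to $v$ in $\sigma$ and case-split on its syntactic form. If $\ell^* = x_i^{()}$, then since evaluating $x_i$ in $\sigma$ yields $v$, we have $(x_i = v) \in \sigma$ and hence $i \in A$, so $\ell^* \in \ell(w(\sigma, v))$ by the first clause of the definition. Otherwise $\ell^* = c'{}^{()}(\ell_1^*, \ldots, \ell_{k'}^*)$ with linear $\ell_j^*$; since the \textsc{Cons} rule is the only way to derive a constructor value, we must have $c' = c$, $k' = k$, and each $\ell_j^*$ evaluating to $v_j$ in $\sigma$. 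The inductive hypothesis then puts $\ell_j^* \in \ell(w(\sigma, v_j))$, and because $\circ \in A$ the second clause of the definition yields $\ell^* \in \ell(w(\sigma, v))$.

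I do not anticipate a significant obstacle here: the statement is essentially definitional once one unfolds $\ell$ and $w(\sigma, v)$ in parallel, and the recursive structure of linear expressions mirrors the recursion in $\ell$. The only small subtlety is observing that $\circ \in A$ always holds by construction, so the constructor-branch of the definition is never suppressed, and that a single value $v$ may correspond to multiple bound variables $x_i$ in $\sigma$, all of which are simultaneously captured by the set $\{i \mid (x_i = v) \in \sigma\}$.
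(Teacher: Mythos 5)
Your proof is correct and follows essentially the same route as the paper's: induction on $v$, characterizing the linear expressions evaluating to $v$ as either variables bound to $v$ in $\sigma$ or constructor applications whose arguments evaluate to the $v_j$, then matching this against the unfolded definitions of $w(\sigma, v)$ and $\ell$. The paper's proof is just a terser version of the same argument; your explicit two-inclusion write-up and the remark that $\circ \in A$ always holds are fine elaborations, not a different approach.
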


\begin{proof}
    Proceed by induction on $v$:
        assume that the lemma is true
            when $v$ is replaced by any of the $v_i$.
    The linear expressions evaluating to $v$ in the context $\sigma$ are
        the variables $x_i$ such that $(x_i = v) \in \sigma$ and
        expressions $c(e_1, \ldots, e_k)$
            where each $e_i$ is a linear expression evaluating to $v_i$ in the context $\sigma$.
    The induction step follows from unfolding definitions.
\end{proof}

Given $\ell$-values
    $w_1 = c_1^A(w_{1, 1}, \ldots, w_{1, p})$ and
    $w_2 = c_2^B(w_{2, 1}, \ldots, w_{2, q})$,
    define
    \[
        w_1 * w_2
        \defeq
        \begin{cases}
            c_1^{A \cap B}(w_{1, 1} * w_{2, 1}, \ldots, w_{1, p} * w_{2, p}),
                & \text{if $c_1 = c_2$;} \\
            c_1^{A \cap B \setminus \{\circ\}}(w_{1, 1}, \ldots, w_{1, p}),
                & \text{otherwise.}
        \end{cases}
    \]

\begin{lemma}
    $\ell(w_1 * w_2) = \ell(w_1) \cap \ell(w_2)$.
\end{lemma}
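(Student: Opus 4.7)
The plan is to prove the identity by structural induction on the pair $(w_1, w_2)$, splitting at each stage into the two cases that define the operation~$*$.

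First I would set up the induction: write $w_1 = c_1^A(w_{1,1}, \ldots, w_{1,p})$ and $w_2 = c_2^B(w_{2,1}, \ldots, w_{2,q})$, and assume the lemma for all pairs $(w_{1,i}, w_{2,j})$ of proper sub-$\ell$-values. Then I would observe the following general set-theoretic fact that makes the bookkeeping clean: linear expressions come in two syntactically disjoint shapes, variables $x_i^{()}$ and constructor applications $c^{()}(\ldots)$, so for any $\ell$-values $w, w'$ the intersection $\ell(w) \cap \ell(w')$ splits as the union of the intersections of their variable parts and of their constructor parts. The variable part of $\ell(w)$ depends only on $\mathbb{N} \cap A$, so the variable intersection is just $\{x_i^{()} \mid i \in \mathbb{N} \cap A \cap B\}$, which is exactly the variable part of $\ell(w_1 * w_2)$ under either clause of the definition of~$*$ (note $A \cap B$ and $A \cap B \setminus \{\circ\}$ agree on $\mathbb{N}$).

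It remains to match the constructor parts. In the case $c_1 \neq c_2$, the constructor sets are labelled by different head symbols and so are disjoint; meanwhile the definition of $*$ removes $\circ$ from the resulting tag set, which kills the constructor part of $\ell(w_1 * w_2)$. In the case $c_1 = c_2$ (so in particular $p = q$), both constructor sets use the same head, and the key observation is that two constructor expressions $c^{()}(e_1, \ldots, e_p) = c^{()}(e_1', \ldots, e_p')$ are equal iff $e_i = e_i'$ for every $i$; consequently
\[
    \{c^{()}(e_1, \ldots, e_p) \mid e_i \in \ell(w_{1,i})\}
    \cap
    \{c^{()}(e_1, \ldots, e_p) \mid e_i \in \ell(w_{2,i})\}
    =
    \{c^{()}(e_1, \ldots, e_p) \mid e_i \in \ell(w_{1,i}) \cap \ell(w_{2,i})\}.
\]
Applying the inductive hypothesis on each coordinate, the right-hand side equals $\{c^{()}(e_1, \ldots, e_p) \mid e_i \in \ell(w_{1,i} * w_{2,i})\}$, which, combined with the condition $\circ \in A \cap B$, is precisely the constructor part of $\ell(w_1 * w_2)$ in this branch.

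The only subtle point is the bookkeeping around the tag~$\circ$ and the fact that the ``constructor part'' of $\ell(w)$ is empty unless $\circ \in A$; handling this uniformly across both cases of $*$ is what makes the second clause ($c_1 \neq c_2$) drop $\circ$ from the tag set, and I expect that making this observation explicit is the main obstacle to writing the proof cleanly. Everything else is routine unfolding of definitions.
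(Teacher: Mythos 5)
Your proposal is correct and follows essentially the same route as the paper's proof: structural induction with a case split on whether $c_1 = c_2$, unfolding the definitions of $\ell$ and $*$. The only difference is expository — you make explicit the set-theoretic facts (variables and constructor applications are syntactically disjoint, and the intersection of two constructor-headed product sets is the componentwise intersection) that the paper's chain of equalities leaves implicit, which if anything makes the argument more careful than the original.
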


\begin{proof}
    Proceed by induction on $w_1$:
        assume that the lemma is true
            when $w_1$ is replaced by any of the $w_{1, i}$.

    \para{Case: $c_1 = c_2$}
    Using the induction hypothesis,
        \begin{align*}
            \ell(w_1 * w_2)
            &=
            \ell(c_1^{A \cap B}(w_{1, 1} * w_{2, 1}, \ldots, w_{1, p} * w_{2, p}))
            \\
            &=
            \{x_i^{()} \mid i \in \N \cap A \cap B\}
            \\
            &\quad \cup
            \begin{cases}
                \{c_1^{()}(w_1', \ldots, w_n') \mid w_i' \in \ell(w_{1, i} * w_{2, i})\},
                    & \text{if $\circ \in A \cap B$;} \\
                \emptyset,
                    & \text{otherwise.}
            \end{cases}
            \\
            &=
            \{x_i^{()} \mid i \in \N \cap A \cap B\}
            \\
            &\quad\cup
            \begin{cases}
                \{c_1^{()}(w_1', \ldots, w_n') \mid w_i' \in \ell(w_{1, i}) \cap \ell(w_{2, i})\},
                    & \text{if $\circ \in A \cap B$;} \\
                \emptyset,
                    & \text{otherwise.}
            \end{cases}
            \\
            &{{}=}
            \(
            \{x_i^{()} \mid i \in \N \cap \mask{B}{A}\}
            \cup
            \begin{cases}
                \{c_1^{()}(w_1', \ldots, w_n') \mid w_i' \in \ell(w_{1, i})\},
                    & \text{if $\circ \in A$;} \\
                \emptyset,
                    & \text{otherwise.}
            \end{cases}
            \)
            \\
            &\mask{{}=}{\cap}
            \(
            \{x_i^{()} \mid i \in \N \cap B\}
            \cup
            \begin{cases}
                \{c_1^{()}(w_1', \ldots, w_n') \mid w_i' \in \ell(w_{2, i})\},
                    & \text{if $\circ \in B$;} \\
                \emptyset,
                    & \text{otherwise.}
            \end{cases}
            \)
            \\
            &= \ell(w_1) \cap \ell(w_2).
        \end{align*}

    \para{Case: $c_1 \neq c_2$}
    Unfolding definitions,
        \begin{align*}
            \ell(w_1 * w_2)
            &=
            \ell(c_1^{A \cap B \setminus \{\circ\}}(w_{1, 1}, \ldots, w_{1, p}))
            \\
            &=
            \{x_i^{()} \mid i \in \N \cap A \cap B\}
            \cup
            \emptyset
            \\
            &{{}=}
            \(
            \{x_i^{()} \mid i \in \N \cap \mask{B}{A}\}
            \cup
            \begin{cases}
                \{c_1^{()}(w_1', \ldots, w_n') \mid w_i' \in \ell(w_{1, i})\},
                    & \text{if $\circ \in A$;} \\
                \emptyset,
                    & \text{otherwise.}
            \end{cases}
            \)
            \\
            &\mask{{}=}{{\cap}}
            \(
            \{x_i^{()} \mid i \in \N \cap B\}
            \cup
            \begin{cases}
                \{c_2^{()}(w_1', \ldots, w_n') \mid w_i' \in \ell(w_{2, i})\},
                    & \text{if $\circ \in B$;} \\
                \emptyset,
                    & \text{otherwise.}
            \end{cases}
            \)
            \\
            &=
            \ell(w_1) \cap \ell(w_2).
        \end{align*}
\end{proof}

We can now describe the algorithm for finding linear expressions.
Combining the lemmata above,
    we find that the set
    \[
        \ell(w(\sigma_1, v_1) * \cdots * w(\sigma_n, v_n))
    \]
    contains exactly the linear expressions coforming to the input--output table $T$.
Since the definitions of $\ell$, $w$, and $*$ deal with
    finite data in finite numbers of steps,
    we can compute this set in its entirety by unfolding definitions,
    thus giving an algorithm.

\section{A proof-of-concept implementation}
\label{sec:poc}

We provide a proof-of-concept implementation
    which can be found at \url{https://gitlab.com/maxvi/ssgp}.
Excluding comments and blank lines,
    it consists of a little less than 2700 lines of Haskell code.
Great care was taken to document the API and every piece of code,
    amounting to over 1300 lines of comments.
Various QuickCheck tests are also included.
We emphasize that the implementation is a \emph{minimum viable product};
    for instance, it is completely unoptimized
        (see \cref{subsec:optimize}).

The implementation deviates from a naïve implementation of the theoretical ideas described above
    in a number of ways,
    many of which we shall cover in this section.
It produces code written in
    what is, sparring $\ysuppose$-expressions, a subset of Haskell.

\subsection{Results}
\label{subsec:results}
We have chosen a few examples, tabulated below,
    which highlight some strengths and weaknesses of our implementation.

\begin{figure}
    \begin{tabular}{| l | l | l | c | r |}
        \hline
        Name
            & Signature
                & Avail.~funcs.
                    & Convergence
                        & Avg.~iter.
        \\
        \hline
        $\f{sum}$
            & $[\Z] \to \Z$
                & $+ : (\Z, \Z) \to \Z$
                    & $5/5 = 100\%$
                        & 149.4
        \\
        $\f{min}$
            & $(\Z, \Z) \to \Z$
                & ${\leq} : (\Z, \Z) \to \xbool$
                    & $5/5 = 100\%$
                        & 373.6
        \\
        $\f{min}'$
            & $[\Z] \to \Z$
                & ${\leq} : (\Z, \Z) \to \xbool$
                    & $5/5 = 100\%$
                        & 1076.4
        \\
        $\f{insert}$
            & $(\Z, [\Z]) \to [\Z]$
                & ${\leq} : (\Z, \Z) \to \xbool$
                    & $5/5 = 100\%$
                        & 1485.6
        \\
        $\f{helloworld}$
            & $() \to [\Z]$
                &
                    & $5/5 = 100\%$
                        & 2611.0
        \\
        $\f{sort}$
            & $[\Z] \to [\Z]$
                & ${\leq} : (\Z, \Z) \to \xbool$
                    & $0/5 = \ \ \ 0\%$
                        & NA
        \\
        $\f{reverse}$
            & $[\Z] \to [\Z]$
                &
                    & $0/5 = \ \ \ 0\%$
                        & NA
        \\
        \hline
    \end{tabular}
    \caption{Some benchmarks of our proof-of-concept implementation.}
\end{figure}

\begin{figure}
    \begin{minted}{haskell}
    insert :: Int -> [Int] -> [Int]
    insert x0 (Cons[Int] x1 (Cons[Int] x2 x3)) =
        if f0 x0 x1 (Cons[Int] x2 x3) then
            Cons[Int] x1 (insert x0 (Cons[Int] x2 x3))
        else
            Cons[Int] x0 (Cons[Int] x1 (Cons[Int] x2 x3))
    insert x0 Nil[Int] = Cons[Int] x0 Nil[Int]
    insert x0 (Cons[Int] x1 Nil[Int]) =
        if f0 x0 x1 Nil[Int] then
            Cons[Int] x1 (Cons[Int] x0 Nil[Int])
        else
            Cons[Int] x0 (Cons[Int] x1 Nil[Int])

    f0 :: Int -> Int -> [Int] -> Bool
    f0 x0 x1 x2 = leq x1 x0
    \end{minted}
    \caption{Example of code produced by our proof-of-concept implementation.}
\end{figure}

Each example was run five times.
Each run was set to have a maximum of 10 cycles (see \cref{sec:cyclic}),
    after which we would declare the evolution to not converge.
The constants used were
    $N_\f{stretch} = 3$ and
    $N_\f{mutate} = 300$.
The last column records the average number of
    iterations (stretches, mutations, rewindings, compressions) that occured
        before evolution converged to correct code.
At each iteration,
    a new program is proposed and
    this program is then run on a pre-fixed set of examples
        through which local scores are assigned.

Every type is an Algebraic Data Type.
Integers are encoded by their binary representation.
Lists are defined recursively in a left-biased way as
    $
        [\Z] \Coloneqq \mathbf{Cons}(\Z, [\Z]) \mid \mathbf{Nil}.
    $
There is no standard library;
    each example can only use the functions specified in the third column,
    nothing else.
If one were to add more useful functions there,
    one would get better convergence rates.

Let us elaborate on the behavior of some of the tabulated functions.
\q{$\min'$} picks out the smallest number from a list,
    defaulting to $0$ if the list is empty.
\q{$\f{insert}$} inserts a number into a list of numbers before the first entry
    that is greater than or equal to the number.
\q{$\f{helloworld}$} lists the ASCII codes for the string \q{Hello World}.
The rest are self-explanatory.

\subsection{Comparison with other works}
\label{subsec:compare}
Koza \cite{koza94} established the \emph{even-parity problem}
    as a benchmark for Genetic Programming.
The objective is to evolve code for the function $[\xbool] \to \xbool$
    that returns
        $\xtrue$ if the list has even number of $\xtrue$s and
        $\xfalse$ otherwise.
Briggs and O'Neill \cite{briggs06} use this problem
    to compare a number of approaches found in literature.
We reproduce their benchmarks below,
    adding a benchmark of our own proof-of-concept implementation.

\begin{figure}
    \begin{tabular}{| l | l | r |}
        \hline
        Approach
            & Citation
                & Assessments
        \\
        \hline
        GP with Local Scoring
            & This paper
                & 882
        \\
        Exhaustive enumeration
            & \cite{briggs06}
                & 9478
        \\
        PolyGP
            & \cite{yu99}
                & 14,000
        \\
        GP with combinators
            & \cite{briggs06}
                & 58,616
        \\
        GP with iteration
            & \cite{kirshenbaum01}
                & 60,000
        \\
        Generic GP
            & \cite{wong96}
                & 220,000
        \\
        OOGP
            & \cite{agapitos06}
                & 680,000
        \\
        GP with ADFs
            & \cite{koza94}
                & 1,440,000
        \\
        \hline
    \end{tabular}
    \caption{
        Comparison of the smallest number of program assessments needed
            for $99\%$ convergence rate on the even-parity problem
            across various approaches.
        Smaller numbers are better.
        (No claims are made as to exhaustiveness of the listed approaches.)
    }
\end{figure}
The last column records estimates for the smallest number of program assessments
    needed during evolution
    for an algorithm to find a correct solution with 99\% probability.
A \emph{program assessment} scores a program
    by running it on a number of examples.

We ran our proof-of-concept implementation 100 times on the even-parity problem.
We used a Lenovo ThinkPad X1 Carbon with an Intel Core i7-8550U CPU.
It took approximately 11 minutes in total.
Every run eventually converged on correct code,
    with the average number of programs tested before convergence being 398.83.
The best run needed to test just 68 programs and the worst 940 programs.
Generalizing from our data,
    we estimate that if we stop evolution after 882 programs,
    we arrive at a correct solution with $99\%$ probability.
The only built-in function made available from the onset was the binary operation,
    $
        \f{not} : \xbool \to \xbool.
    $


\subsection{Local scores and deep recursion}
For scores $S(e)$,
    we use instead a weighted average in which
        each summand is weighted by the reciprocal of the call-stack depth.
In effect, deep recursive calls have less influence on scores.

\subsection{Continuous scoring}
\label{subsec:continuous}
The Hello World example uses a \q{continuous} scoring function,
    meaning that it takes into account
        how close the string that $\f{helloworld}()$ produces
        is to the desired string \q{Hello World},
    instead of just rewarding $1$ when correct and $0$ when incorrect
        (in which case evolving the correct code would be virtually impossible).
Local scoring means that if $\f{helloworld}()$ returned the string \q{Hello Gorld},
    we trace the problematic character, G, back to the expression that produced it
    and then change that expression little by little,
        inching in on the correct character, W.

\subsection{Timeouts and restrictions to recursion}
\label{subsec:rec}
A fundamental issue is that the problem of bounding the running time of
    an arbitrary program in a Turing-complete language
    is undecidable
        as was famously shown by Turing in 1936 \cite{turing36}.
To overcome this issue,
    we imposed restrictions on what is allowed during execution:
\begin{enumerate}[(1)]
    \item
    We impose a limit on
        how many function calls can be made in one program execution.
    If exceeded,
        the local score of
            every expression that was evaluated during the execution
            gets punished.
    \item
    We impose restrictions on what forms of recursions are allowed.
    Specifically, we shall only allow a call $f(v_1, \ldots, v_n)$
        if there is an $i$
        such that for all calls $f(w_1, \ldots, w_n)$ on the call-stack,
            $|v_i| < |w_i|$.
    Here, $|v|$ denotes the \q{size} of a value $v$,
        that is, its number of nodes.
    If this condition is violated,
        the local score of every expression on the \q{expression stack}
            (the expression-level analogue of the call-stack)
            gets punished.
\end{enumerate}
The restriction (2) alone ensures termination,
    as the following lemma shows.

\begin{lemma}
    There is no infinite sequence $a_1, a_2, \ldots \in \N^n$
        with the property that each $a_i$ has a coordinate $a_{i, j} \in \N$
            such that $a_{i, j} < a_{i', j}$ for any $i' < i$.
\end{lemma}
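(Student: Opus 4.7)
The plan is to argue by contradiction and use a pigeonhole-style reduction to the one-dimensional case, which is just the well-foundedness of $\N$.

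Suppose for contradiction that such an infinite sequence $a_1, a_2, \ldots \in \N^n$ exists. For each index $i$, fix a choice of witnessing coordinate $j(i) \in \{1, \ldots, n\}$; that is, $j(i)$ is one of the (possibly several) indices with the property that $a_{i, j(i)} < a_{i', j(i)}$ for every $i' < i$. This defines a function $j : \N \to \{1, \ldots, n\}$.

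Since the codomain of $j$ is finite while the domain is infinite, the pigeonhole principle yields some $j^* \in \{1, \ldots, n\}$ whose preimage under $j$ is infinite. Enumerate this preimage in increasing order as $i_1 < i_2 < \cdots$. Then for every $k \geq 2$, taking $i = i_k$ and $i' = i_{k-1}$ in the defining property (applicable because $i_{k-1} < i_k$ and $j(i_k) = j^*$), we obtain $a_{i_k, j^*} < a_{i_{k-1}, j^*}$. Hence $(a_{i_k, j^*})_{k \geq 1}$ is a strictly decreasing sequence in $\N$, which is impossible.

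There is no real obstacle: the proof is essentially a one-line pigeonhole argument together with the well-foundedness of $\N$. The only bookkeeping subtlety is making sure the witnessing coordinate is allowed to depend on $i$ (it is, by the statement), so that one cannot directly compare all coordinates across different $i$'s; this is precisely what pigeonhole resolves by pinning down a single coordinate $j^*$ valid along an infinite subsequence.
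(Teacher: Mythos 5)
Your proof is correct and is essentially the paper's own argument: the paper's ``partition into $n$ subsequences'' by witnessing coordinate is exactly your pigeonhole step, and both proofs conclude by extracting an infinite strictly descending chain in $\N$ along the infinite class. Your version is just slightly more explicit about fixing the choice function $j(i)$ and about which instances of the hypothesis are being invoked.
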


\begin{proof}
    Assume for contradiction that such sequence $a_*$ exists.
    The sequence may be partitioned into $n$ subsequences,
        $
            p_{1, *}, \ldots, p_{n, *}
        $
        such that for each $i = 1, \ldots, n$,
        $
            p_{i, k, i} < p_{i, k', i}
        $
        for any $k' < k$.
    Since $a_*$ is infinite,
        one of these subsequences, say $p_{j, *}$, is infinite.
    But then
        $
            p_{j, 1, j} > p_{j, 2, j} > \cdots
        $
        is an infinite descending sequence in $\N$,
        which is not possible.
\end{proof}

\subsection{Controlled Recursion}
\label{subsec:controlled}
When a recursive function is broken,
    it is usually catastrophically broken.
This is because errors propagate (and multiply) upwards through recursion.
This affects negatively the quality of the local scores
    which we want to reflect \q{closeness} to a solution.
For example, if we were to define natural numbers inductively using successors,
    $
        \N \Coloneqq 0 \mid S(\N),
    $
    then the recursive definition
    \begin{align*}
        0 \oplus m &\defeq m
        \\
        S(n) \oplus m &\defeq n \oplus m
    \end{align*}
    is only one $S$ away from being the correct definition of addition
        ($S(n) \oplus m = S(n \oplus m)$),
    yet in terms of actual behavior it is quite far away:
        $n \oplus m = m$.
To deal with this issue,
    we employ a technique we shall call \emph{Controlled Recursion},
    in which we answer recursive calls using training data.
For example, if we were trying to evolve $\oplus$ to be addition,
    we would,
    when producing the values $w_1, \ldots, w_N$ in \cref{subsec:local_score_algo},
    evaluate the recursive call $n \oplus m$
        in the body of the second equation above as $n + m$
        instead of using the actual code for $\oplus$.
That means that the output only differs by $1$ from the desired output,
    and the score is accordingly better
        (that is, if we use a continuous scoring function as in \cref{subsec:continuous})
        than if we were to answer recursive calls using the actual code.
To discourage degenerate self-referential definitions like
    $
        S(n) \oplus m \defeq S(n) \oplus m,
    $
    we only answer recursive calls like this that are \q{smaller}
        than the calls on the call-stack,
        a la the restriction (2) we imposed in \cref{subsec:rec}.

\subsection{Stretches}
Since the examples did not appear to need it,
    we disabled the stretch (4) described in \cref{subsec:stretch}.
Moreover, for stretch (1),
    we only introduce $\ysuppose$-expressions
        at the top of bodies of equations.\footnote{
    It would probably be better to allow them to be introduced wherever
        but have a mechanism pushing them up the expression tree (\q{generalization})
            before they are promoted into $\yif$-conditionals
                so as to not have to reinvent the same conditions for various subexpressions.
}

\subsection{Limitations}
A significant issue is the performance suboptimality of our proof-of-concept implementation.
With optimized code execution,
    we could perhaps try many times more mutations in the same amount of time,
    thereby searching much deeper for new mutations.
We discuss this further in \cref{subsec:optimize}.

Another striking issue is that of overfitting.
For example, overfitting obstructs discovery of correct code for the function
    $
        \f{reverse} : [\Z] \to [\Z]
    $
    reversing the items of a list.
Instead of attacking the problem algorithmically,
    it generates more and more cases:
        one for the empty list $[]$,
        one for $[x_1]$,
        one for $[x_1, x_2]$,
        and so on.
While it obviously does get closer to a correct solution in terms of behavior,
    it does not approach a correct solution in terms of code.
The issue is that lists are defined recursively in a left-biased way,
    making operations on the beginning of the list far easier to discover
        than operations on the end.

A human is able to immediately tell that such code is bad style;
    perhaps if one taught the same intuition to a neural network,
        that could be used to stir away from overfitting.
See \cref{subsec:ml} for more discussion.

Overfitting was not an issue
    in any of the converging examples from \cref{subsec:results}.

\section{Extensions and future work}
\label{sec:future}

\subsection{Crossover and parallelism}
In contrast to a lot of the literature (see \cref{subsec:compare}),
    we use what is a kind of hill-climbing algorithm:
    at every point in the algorithm,
        we have exactly one program.
It is typical to instead evolve a population of programs,
    mutating them and using a procedure called \emph{crossover}
        to combine two programs producing one or more offspring programs.
We remark that
    this could be added on top of our hill-climbing method,
    making for highly parallelized Genetic Programming:
        one could do $C$ cycles of hill-climbing (see \cref{sec:cyclic}) for each program in the population
        $
            \{P_{(1)}, \ldots, P_{(n)}\}
        $
        in parallel,
        yielding a new population
        $
            \{P_{(1)}', \ldots, P_{(n)}'\}
        $
        which is then extracted for crossover
            combining programs at random and
            killing off all but the $n$ best resulting programs.
This procedure is repeated until a solution is found.
With this, many hill-climbing processes run in parallel but
    still \q{share discoveries} once in a while,
    avoiding too much duplicated effort.

\subsection{Optimizations}
\label{subsec:optimize}
By far the most significant component performance-wise is
    that which executes programs.
As we were more focused on the theoretical background,
    no serious attempt at optimizing this was made
        in our proof-of-concept implementation written in Haskell.
Let us discuss some techniques and ideas
    that could be used in a performant implementation.

\begin{enumerate}[(1)]
    \item
    Use a more performant language such as Rust or C.
    \item
    Currently, we represent code as an Abstract Syntax Tree
        which is quite inefficient for a number of reasons.
    For instance, execution requires dereferencing a lot of pointers
        resulting in many cache misses.
    A more efficient, flat representation of code,
        such as a stack machine,
        could be used instead.
    \item
    Considering that the same program is executed many times
        ranging over the different test cases,
        it would make sense
            to first compile the program to (optimized) machine code.
    Compilation is typically a heavy process and
        it would therefore be preferable
            if only the part of the program that changed was recompiled
                (incremental recompilation).
    \item
    Since we are running the same (or similar) code again and again
        and do possibly overlapping recursion,
        it would be beneficial to make heavy use of memoization,
        trading CPU cycles for increased memory usage.
    \item
    If the programs were instead executed with respect to a lazy evaluation strategy,
        programs that contain bloat would execute much faster.
    For example,
        calling a function and passing its return value to
            another function in an argument that is unused
            would then not have any effect on performance.
    In some instances, this would result in an exponential speedup:
    in our experiments, it was a common occurrence
        that correct but extremely slow programs would evolve quickly
        with the majority of evolution then spent on
            reducing the exponential running time that is due to unused evaluations,
        as such exponential running time hinders getting perfect scores on
            some of the bigger test cases
            due to the timeouts described in \cref{subsec:rec}.
    \item\label{item:necessary_values}
    When testing a mutation $e \becomes e'$,
        it is only necessary to recompute a $w_i$
            (see \cref{subsec:local_score_algo})
            if it contains the tag of $e$ in one of its traces,
            yet we recompute all.
    \item
    A significant amount of time is spent on testing out bad mutations,
        most of which are scoring extremely poorly.
    One could use heuristics for quickly discarding
        large swaths of bad mutations with only few false positives.
    Local Scoring is particularly apt for this.
    For example, if one was to test a mutation $e \becomes e'$ of an expression $e$,
        one could first compute the score of $e'$
            with respect to a small, randomly chosen subset of $w_1, \ldots, w_N$
                (but only selecting necessary $w_i$'s as in \cref{item:necessary_values}),
        and then preemptively discard the mutation if this \q{approximate score} is
            significantly smaller than $S(e)$.\footnote{
                Or perhaps, instead of $S(e)$, it would be better to
                    use the corresponding approximate score computed for $e$.
                This would necessitate storing not just the local scores
                    but also all summands of the averages,
                    so that averages of subsets can be computed at a later point.
            }
    \item
    While our proof-of-concept implementation does ensure
        that the exact same program is not tried multiple times,
        one could employ more advanced means of ensuring
            that equivalent mutations are not attempted multiple times.
    One would need to specify algorithms that are able to tell
        if a mutation on a program $P$ corresponds
            to a mutation on a program $P'$ equivalent to $P$ by a number of stretches
                (see \cref{subsec:stretch}).
    Perhaps a hash function could be devised for this purpose.
\end{enumerate}

\subsection{Integration with Deep Learning methods}
\label{subsec:ml}
We are choosing new code to try out essentially at random
    instead of doing so intelligently and
        drawing from an intuitive understanding of
            what changes are needed in what situations.
Recently, there has been major breakthroughs on
    the subject of using Deep Learning methods to generate code
    (e.g., \cite{chen21} and \cite{li22}).
An interesting direction would be to hybridize the approaches,
    combinining the \emph{overview} of Local Scoring with the \emph{intuition} of Deep Learning.
Let us speculate a bit on what that could look like.

\begin{enumerate}[(1)]
    \item
    One could incorporate into the scores an assessment of the \q{plausibility} of code,
        as determined by a Deep Learning model,
        rewarding code that looks like human-written code.
    This could help guide the search in directions
        of known patterns,
        combating issues like overfitting and getting stuck in local minima.
    \item
    What kind of mutation or stretch to do could be chosen using a model
        that is trained (on a fixed training set of code)
        to predict what is most likely to cause improvements.
    Deep Learning-based code generation could be used to
        propose candidates for the replacing expressions
        that we use when mutating
        instead of generating these at random.
    \item
    Additionally, since we use mutation conservativity (see \cref{subsec:greedy}),
        there will typically be long periods of attempting mutations
            before a successful mutation is found.
    One could collect data from these failed mutations.
    An idea is to then use Deep Learning to
        analyze these attempted mutations and their resulting local scores
        in order to get a sense of which mutations do well,
        which could then be leveraged to pick out qualified guesses for new mutations
            making use of problem-specific knowledge.
    Perhaps, one could even train a model during evolution on this data
        to predict what mutations are likely to be successful.
\end{enumerate}


\bibliographystyle{alpha}
\bibliography{references}

\end{document}